\newtheorem{theorem}{Theorem}
\newtheorem{lemma}{Lemma}
\newlength{\boxwidth}
\DeclareRobustCommand{\qed}{%
  \ifmmode 
  \else \leavevmode\unskip\penalty9999 \hbox{}\nobreak\hfill
  \fi
  \quad\hbox{\qedsymbol}}
\newcommand{\openbox}{\leavevmode
  \hbox to.77778em{%
  \hfil\vrule \vbox to.675em{\hrule width.6em\vfil\hrule}%
  \vrule\hfil}}
\newcommand{\qedsymbol}{\openbox}
\newenvironment{proof}[1][\proofname]{\par \normalfont
  \topsep6\p@\@plus6\p@ \trivlist
  \item[\hskip\labelsep\bfseries\itshape #1.]\ignorespaces }{%
  \qed\endtrivlist }
\newcommand{\proofname}{Proof}
\newcommand{\hide}[1]{}
\def\J{\hbox{\rm JOIN}}
\def\L{\hbox{\rm LINK}}
\def\PJ{\hbox{\rm PJOIN}}
\def\E{{\sf E}}
\title{Unsupervised Learning Through Prediction in a Model of Cortex}
\author{
Christos H. Papadimitriou\thanks{UC Berkeley. {\tt christos@cs.berkeley.edu}} 
\and 
Santosh S. Vempala\thanks{Georgia Tech. {\tt vempala@gatech.edu}}\\
}
\begin{document}

\maketitle

\begin{abstract}
We propose a primitive called PJOIN, for ``predictive join,'' which combines and extends the operations JOIN and LINK, which Valiant proposed  as the basis of a computational theory of cortex.  We show that PJOIN can be implemented in Valiant's model. We also show that, using PJOIN, certain reasonably complex learning and pattern matching tasks can be performed, in a way that involves phenomena which have been observed in cognition and the brain, namely memory-based prediction and downward traffic in the cortical hierarchy.
\end{abstract}

\section{Introduction}  
Human infants can do some amazing things, and so can computers, but there seems to be almost no intersection or direct connection between these two spheres of accomplishment.   In Computer Science we model computation through algorithms and running times, but such modeling quickly leads to intractability, even when applied to tasks that are very easy for humans.  The algorithms we invent are clever, complex and sophisticated, and yet they work in fashions that seem completely incompatible with our understanding of the ways in which the brain must actually work --- and this includes learning algorithms.  Accelerating advances in neuroscience have expanded tremendously our understanding of the brain, its neurons and their synapses, mechanisms, and connections, and yet no overarching theory appears to be emerging of brain function and the genesis of the mind.   As far as we know, and the spectacular successes of neural networks \cite{Hinton2006,Lecun1998} notwithstanding, no algorithm has been proposed which solves some nontrivial computational problem in a computational fashion and style that can be credibly claimed to reflect what is happening in the brain when the same problem is solved. This is the scientific context that is motivating the present work.

We believe that formal computational models and algorithms can play a valuable role in bridging these gaps.  We have been inspired in this regard by the pioneering work of Les Valiant on {\em the neuroidal model}, a random directed graph whose nodes and edges are capable of some very local computation \cite{Valiant95}.  We have also been prompted and influenced by certain interesting facts and conjectures about the brain's operation, which have been emerging in recent years.  First, even though the visual cortex, for example, processes sensory input in a hierarchical way, proceeding from areas lower in the hierarchy to higher areas (from retina to V1, from there to V2, to V4, to MT, etc.), there seem to be many neuronal connections, and presumably significant firing traffic, directed {\em downwards} in this hierarchy (see e.g., \cite{Lamme00} and subsequent work).  Second, there seems to be an emerging consensus among students of the brain that {\em prediction} may be an important mode of brain function \cite{Churchland94, Hawkins04, Rao99}:  For example, vision involves not just of passive hierarchical processing of visual input, but also active acquisition of new input via saccades (rapid forays to new parts of the visual field), whose purpose seems to be to verify predictions made by higher layers of the visual cortex; significantly, many eye muscles are controlled by low areas of the visual hierarchy. Third, the connections in the cortex seem to deviate from the standard $G_{np}$ model of random graphs, in that reciprocal connections (closing cycles of length two) and transitive connections (connecting nodes between which there is a path of length two {\em ignoring directions\/}) are much more likely than $G_{np}$ would predict \cite{Song05}.  We briefly discuss a new model of brain connectivity in Section \ref{sec:reciprocity}, as well as ways in which such an assumption complements our argument.

Valiant models the cortex as a random directed graph of {\em neuroids} (abstract neuron-like elements) connected via directed edges called {\em synapses}.  The algorithms running on this platform are {\em vicinal}, by which it is meant that they are local in a very strict sense: the only communication from other neuroids that a neuroid is allowed to receive is the sum of the action potentials of all currently firing neuroids with synapses to it.  This is the only outside information a neuroid can use in deciding what to do next.  Valiant goes on to posit that real-world objects or concepts can be represented by {\em sets of neuroids}.  There is a tradition of using such sets of neurons, or equivalently sparse binary vectors, as the basic formal unit of brain function \cite{Kanerva1988}, but Valiant's is, in our reading, the most explicitly algorithmic such framework.  If one assumes that the underlying directed graph is random in the sense of the $G_{np}$ model of random graphs \cite{Erdos60}, and taking into account what we know about the number of neurons in the cortex, the number of synapses out of (or into) a neuron, and the number of firing neurons that can cause another to fire, Valiant argues carefully \cite{Valiant95} that it is possible to choose some number $r$, perhaps  between $10$ and $100$, such that sets of $r$ neurons can usefully represent a concept.  Such sets are called {\em items}, and they constitute the basic element of Valiant's theory: all $r$ neuroids of an item firing (or perhaps some overwhelming majority thereof) is coterminal with the corresponding concept being ``thought about.''   Valiant shows that such items can be combined through the basic operations of JOIN and LINK, creating compound new items from, respectively simply connecting, existing items.  For example, if $A$ and $B$ are items (established sets of about $r$ neuroids) then Valiant shows how to create, via a vicinal algorithm, a new item $\J(A,B)$ which stands for the combination, or conjunction, of the two constituent items.  

Our main contribution is the formulation and implementation of a new operation on items that we call {\em predictive join} or {\em PJOIN}.  This operation PJOIN extends JOIN in that, if only one of the constituent elements of $\PJ(A,B)$ fires, say $A$ fires, then the structure will {\em predict $B$,} and initiate a sequence of downstream events whose purpose is to check this prediction.  We show (Theorem 1) that the PJOIN of two items --- which may be, and typically will be, themselves PJOINs --- can be created by a vicinal algorithm so that it functions according to certain precise specifications, capturing the discussion above.  

We suspect that the power of cortical computation lies not so much in the variety and sophistication of the algorithms that are actually implemented in cortex --- which might be very limited ---  but in the ways in which the cortex interacts with an appropriately interesting environment through sensors and actuators.  In other words, cortical architecture may happen to be very well adapted to the kinds of environments mammals live in.  Here we use our PJOIN operation --- {\em and almost nothing else} --- to design a vicinal algorithm for performing a basic task that would be required in any such interaction with an environment:  unsupervised learning of a set of patterns.  By {\em learning} we mean that, if a pattern is presented to a sensory device for long enough, then our vicinal algorithm will ``memorize it,'' in that it will create a structure of items connected by PJOINs which represent this pattern.  If many patterns are presented, one after the other and with repetitions, all these patterns will be memorized in a shared structure.  If sometime in the future one of the already presented patterns is presented again, then with high probability the structure will ``recognize it,''  by which we mean that there will be one special item (set of neuroids)  whose firing will signal the act of recognition of that particular item. 
Moreover, a pattern that has already been presented once needs to be presented for a considerably shorter time on subsequent presentations.
While $\J$ can be viewed as the {\em unsupervised memorization} of two items in the neuronal model, $\PJ$ allows unsupervised memorization of patterns of arbitrary length. As we note in Section \ref{sec:learn}, using $\J$ directly for more than two items does not yield this desired consistency in behavior. 

We simulated these algorithms and measured their performance on binary patterns, including the sizes of representations created and updates to them, the total traffic generated in each presentation, and the fraction of traffic that is downward.  The results, for varying ranges of parameters, are presented in Section \ref{sec:simulation}.  They are consistent with the theoretical predictions.  In particular, both memorization and recognition are accomplished in a rather small number of ``steps'' --- firings of individual neuroids --- typically in the low tens; this is an important desideratum, because we know that complex cognitive operations can be performed in quite a bit less than second, and thus in fewer than a hundred such steps.

Finally, in Section \ref{sec:control} we discuss several important issues surrounding our results, such as the nature of the rudimentary {\em algorithmic control} necessary for the running of these and other vicinal algorithms, a new enhanced model of random graphs that appears to agree better with recent insights about cortical connectivity, as well as important further steps for this research program.


\section{Valiant's Computational Model of Cortex}
The basic structure in Valiant's computational theory of cortex is an {\em item,} which represents in the brain a real-world object or concept.  Each item is implemented as a set of neurons.  Valiant considers two alternative representations, one in which these sets are disjoint and one in which they may overlap; here we shall work in the disjoint framework, but we see no reason why our work could not be extended to overlapping items.  Once such a set of neurons has been defined, the simultaneous firing of these neurons (or perhaps of some ``overwhelming majority'' of them) represents a brain event involving this item.

Valiant defined two key operations involving items.  The $\J$ operation creates, given two items $A$ and $B$, a new item $C=\J(A,B)$.  Once the new item has been created, every time both $A$ and $B$ fire, $C$ will fire in the next instant.  A second operation is $\L$.  Given two established items $A$ and $B$, the execution of $\L(A,B)$ has the effect that, henceforth, whenever item $A$ fires, $B$ will fire next.  Valiant showed that these two operations --- both the creation of these capabilities, and their application --- can be implemented by simple {\em vicinal algorithms}.  These algorithms are implemented in a particular formal {\em neuroidal model} of computation, involving only local computation, and communication strictly through action potentials.  It is argued in \cite{Valiant00,Valiant06} that vicinal algorithms realistically capture the capabilities of neurons.  Furthermore, it is argued in \cite{Valiant05, Valiant12} that even cascading usage of these two operations  (taking the $\J$ of two $\J$s, and then again, and so on) is up to a point a reasonably realistic hypothesis.  

The number $r$ of neurons representing an item depends on the numerical parameters of cortex, as well as its random nature.  Let $n$ be the total number of neurons, and let $d$ be the typical {\em out-degree} of a neuron, that is, the number of other neurons with which the neuron synapses. In the human brain, $n$ is believed to be roughly $10^{11}$ and $d$ roughly $10^4 - 10^5$.  Let $k$ be a number such that, if $k$ neighboring neurons of a particular neuron are firing, then these $k$ action potentials are typically enough to cause the neuron to fire. 
Valiant argues convincingly in \cite{Valiant05} that, given our current knowledge of the parameters $n$, $d$, and $k$ for various organisms, and assuming that the connections between neurons are random in the sense of $G_{np}$, a range exists for the new parameter $r$, such that $r$ neurons can usefully represent an item (see also \cite{FeldmanV09}).  The preceding discussion of the parameters $k$ and $r$ is an argument about the numerical plausibility of the neuroidal model, and does not mean that all items are represented by the same exact number of neurons, or that synapses are required to have the same strength and all neurons the same number of incoming synapses at all times --- as we shall see, a neuroid can even set its own threshold in this model, and a synapse its own strength.  However, for such a particular vicinal  algorithm to be plausible, the numerical range of these parameters should be reasonable; for example, the firing of a neuron caused by the firing of very few other neurons, or a single one (either because of a tiny threshold or a huge synaptic strength) would be unreasonable.  In the algorithms of this paper we do allow the parameters $r$ and $k$, and even the firing threshold, to vary by a small factor in certain exceptional cases.  Another desideratum on vicinal algorithms is that, even though the model assumes discrete time steps of a global clock, one should not rely too much on this.  In other words, such algorithms should not have very long chains of consecutive steps requiring strict synchronization between many neuroids.  Only sequences of up to three or four synchronized steps are used here (in \cite{Valiant95,Valiant00,Valiant06} only two synchronized steps were needed).   Naturally, it is necessary to assume that all neuroids comprising an item can fire simultaneously, in response to other items firing.

\subsection*{Vicinal Algorithms}
The substrate of vicinal algorithms is a directed graph whose vertices are {\em neuroids}, (idealized neurons) connected through directed edges called {\em synapses}.  At every discrete time $t \geq 0$, a neuroid $v_i$ is at a {\em state} $(T^t_i, f^t_i, q^t_i)$ consisting of three components: a positive real number $T^t_i$ called its {\em threshold}; a Boolean $f^t_i$ denoting whether $i$ is {\em firing} presently; and another memory $q^t_i$ which can take finitely many values.  In our algorithms, $T$ will for the most part be kept constant.  Similarly, every synapse $e_{ji}$ from neuroid $v_j$ to neuroid $v_i$ has a state $(w^t_{ji},qq^t_{ji})$ consisting of a real number $w^t_{ji}$ called its {\em strength;} plus another finitary memory $qq^t_{ji}$.  When the superscript $t$ is understood from context, it will be omitted.  

The operation of vicinal algorithms is strictly local: neuroids and synapses update their own state in discrete time steps.  The only way that the operation of neuroid $v_i$ can be influenced by other neuroids is through the quantity $W_i = \sum_{j: f_j =1} w_{ji}$, the sum total of incoming action potentials.  At each time instant, the firing status of all neuroids is updated:
$$W_i^t \leftarrow \sum_{j: f_j^t = 1} w^t_{ji};\ \ {\rm if\ } W^t_i \geq T^t_i \ {\rm then\ } f^{t+1}_i \leftarrow 1,$$
and after that the remaining of the state of the neuroids, and the state of the synapses, are updated:
$$(T_i^{t+1}, q_i^{t+1}) \leftarrow \delta(T_i^t,q_i^t,f^{t+1}_i,W^t_i),$$
$$(w^{t+1}_{ji}, qq^{t+1}_{ji}) \leftarrow \lambda(q_i^t,qq^t_{ji},f^t_j,f^{t+1}_i,W^t_i).$$
That is, at each step $f_i$ is set first --- this is tantamount to assuming that the action potentials of other neuroids are transmitted instantaneously --- and then the remaining state components of the neuron's state, and also of its incoming synapses, are updated in an arbitrary way.  Notice that the update functions $\delta$ and $\lambda$ are {\em the same for all neuroids $v_i$ and synapses $e_{ji}$}.  

This is the repertoire of actions of vicinal algorithms.  For the algorithms to happen, some rudimentary control is necessary, causing the synchronous firing of certain neuroids, for example in the initiation of the creation of $\J(A,B)$ explained next.  We discuss the plausible vicinal implementation of such control in Section \ref{sec:reciprocity}. 

In our descriptions of our vicinal algorithms, instead of presenting explicit functions $\delta$ and $\lambda$, we shall use steps manipulating the {\em total state} of each neuroid, by which we shall mean all local information, namely the memory and threshold of the neuroid, plus the memory and strengths of all of its incoming synapses.

As an example of a vicinal algorithm, we next explain how the JOIN and LINK operations are performed; our exposition follows the one in Valiant \cite{Valiant12}.

\subsection*{The JOIN and LINK Operations}  Given two items $A$ and $B$, each represented by $r$ neuroids, the operation $\J(A,B)$ can be implemented in two steps, as follows:

\begin{itemize}
\item The neuroids that will represent $C=\J(A,B)$ will be recruited from a population of neuroids such that there are expected (on the basis of the random graph properties of the neuroidal network)  to exist at least $k$ synapses going from each of $A,B$ to at least $r$ of these neuroids, for some desired parameters $r,k$.  These neuroids are initialtly at a  total state that we call {\sc Candidate}: not firing, all memories at some null state, the standard threshold $T$, and all strengths of incoming synapses equal to $T\over k$.  

\item At the first step, all neuroids in $A$ fire.  If, at the end of this step, a candidate neuroid fires (and therefore it had at least $k$ synapses coming from $A$), then its total state becomes one that we call {\sc poised}: all synapses that come from firing $A$ neuroids have now strength $T^2\over 2kW_i$ (so that if all neuroids of $A$ fire again, they will together achieve a sum of $T\over 2$; recall that $W_i$ is the total strength of all synapses from $A$, and therefore $kW_i\over T$ denotes the number of incoming synapses from $A$).  All candidates that did not fire enter a {\sc Dismissed} total state with all incoming strengths zero: they will not be needed to form $C$.

\item Then in the next step all neuroids of $B$ fire.  If a poised neuroid fires in response, then it enters an {\sc Operational} state, in which it is ready to partake in the $\J(A,B)$ operation, should both $A$ and $B$ fire simultaneously in the future.  These are the neurons which will henceforth comprise item $C=\J(A,B)$.  All synapses from $B$ have strength $T^2\over 2kW_i$ where $W_i$ denotes the new sum of strengths, while the neuroids from $A$ retain their strength (and thus, if now all neuroids in $A$ and $B$ fire now, all neuroids in $C$ will fire).   All poised neuroids that did not fire in this second step are now {\sc Dismissed}.
\end{itemize}

We can summarize the vicinal algorithm for the creation of  $\J(A,B)$ as follows:
{
\begin{itemize}
\leftskip 4cm \item[Step 1]    $A\Rightarrow$ fire;\ \ \  not $A$ and not $B\Rightarrow$ {\sc Candidate}
\item[Step 2]    {\sc Candidate} and {\sc Fired} $\Rightarrow$ {\sc Poised};\ \ \ $B\Rightarrow$ fire
\item[After Step 2] {\sc Poised} and {\sc Fired} $\Rightarrow$ {\sc Operational}; {\sc Poised} and not {\sc Fired} $\Rightarrow$ {\sc Dismissed}
\end{itemize}
}

The operation $\L(A,B)$ which, once executed, causes all neuroids of $B$ to fire every time $A$ fires, is implemented in a similar manner, by employing a large population of intermediate {\em relay} neuroids \cite{Valiant12} that are shared by all linking pairs.

\begin{itemize}
\leftskip 4cm \item[Step 1]    $A\Rightarrow$ fire;\ \ \  $B\Rightarrow$ {\sc Prepared};
\item[Step 2]    Some relay neurons fire as a result of $A$'s firing at Step 1; the neurons of $B$ will likely all fire as a result of this.
\item[After Step 2] {\sc Prepared} and {\sc Fired} $\Rightarrow$ {\sc L-Operational}
\end{itemize}
Here by {\sc L-Operational} it is meant that all incoming synapses from relay neuroids which have fired have memory state {\sc L-operational} and strength equal to $T/k$, where $T$ is the threshold and $k$ is a small enough integer so that we expect there to be $k$ relay nodes which both fire upon $A$ firing {\em and} synapse to all neurons of $B$.

\subsection*{Refraction}
Neurons in the brain are known to typically enter, after firing, a {\em refractory period} during which new firing is inhibited.   In our model, the firing of a neuroid can be followed by a refractory period of $R$ steps.  This can be achieved through $R$ memory states, say by increasing the neuroid's threshold $T$ by a factor of $2^{R-1}$, and subsequently  halving it at every step.  {We shall routinely assume that, unless otherwise stated, the firing of an item will be followed by one refractory step.}


\section{Predictive JOIN}
We propose now an enhanced variation of JOIN, which we call {\em predictive JOIN}, or $\PJ(A,B)$.  As before, every time $A$ and $B$ fire simultaneously, $C$ will also fire.  In addition, if only $A$ fires, then $C$ will enter a state in which it will ``predict $B$,''  in which $C$ will be ready to fire if just $B$ fires.  Similarly, a firing of $B$ alone results in the prediction of $A$.  The effect is that the firing of $A$ and $B$ now need not be simultaneous for $C$ to fire.  

But there is more.  Having predicted $B$, $C$ will initiate a process whereby its prediction is checked:  Informally, $C$ will ``mobilize'' a part of $B$.  If $B$ also happens to be the predictive JOIN of some items $D$ and $E$, say (as it will be in the intended use of PJOIN explained in the next section), then parts of those items will be mobilized as well, and so on.  A whole set of items reachable from $B$ in this way (by what we call {\em downward firings\/)} will thus be searched to test the prediction.  As we shall see in the next section, this variant of JOIN can be used to perform some elementary pattern learning tasks.

\subsection*{Implementation of PJOIN}
The vicinal algorithm for creating $\PJ(A,B)$ is the following:

\paragraph{Create $\PJ(A,B)$}
\begin{enumerate}
\item First, create $C=\J(A,B)$, by executing the vicinal algorithm of the previous section.  
\item Each neuroid in $C$ participates in the next steps with probability half.  Valiant \cite{Valiant00}  allows vicinal algorithms to execute randomized steps, but here something mush simpler is required, as the random bit needed by a neuroid could be a part of that neuroid, perhaps depending on some component of its identity, such as its layer in cortex.  We shall think of the chosen half (in expectation)  of the neuroids of item $C$ as comprising a new item which we call $C_P$, for ``predictive $C$.''  That is, the neurons of item $C_P$ are a subset of those of item $C$.  (Strictly speaking this is departure from the disjoint model of implementing items, but it is a very inessential one.) The neuroids of $C$ that are not in $C_P$ now enter a total state called {\sc Operational}, ready to receive firings from $A$ and/or $B$ as any neuroid of $\J(A,B)$ would.

\item Suppose now that $A$ and $B$ are predictive joins as well, and therefore they have parts $A_P$ and $B_P$ (this is the context in which $\PJ$ will be used in the next section).  In the next two steps we perform $\L(C_P,A_P)$ and $\L(C_P,B_P)$, linking the predictive part of $C$ to the predictive parts of its constituent items; there is no reason why the two LINK operations cannot be carried out in parallel.  In the discussion section we explain that this step can be simplified considerably, omitting the LINK operations, by taking into account certain known facts about the brain's connectivity, namely the {\em reciprocity} of synapses.  After these LINK creations, the working synapses from the relay neuroids to the neuroids of $A_P$ and $B_P$ will be in a memory state {\sc Parent} identifying them as coming ``from above.'' Such synapses will have a strength larger than the minimum required one (say, double), to help the neurons identify firings from a parent.

\item After this is done, $C_P$ enters the total state {\sc P-Operational}, where all strengths of synapses from $A$ and $B$ to $C_P$ (but {\em not} to the rest of $C$) are doubled.  The effect is that the neuroids of $C_P$ will fire if {\em either} $A$ fires, {\em or} $B$ fires, {\em or both}.  This concludes the creation of $\PJ(A,B)$. Notice that it takes four steps (we discuss in the last section one possible way to simplify it to three steps).  After the creation of the predictive join is complete, indeed, if only one of $A$ or $B$ fires, then $C_P$ will fire, initiating a breadth-first search for the missing item.  
\end{enumerate}

To summarize, we show below the vicinal algorithm, which we call {\bf Create $\PJ(A,B)$}:
\begin{itemize}
\leftskip 4cm \item[Steps 1 and 2]  $C=\J(A,B)$;
\item[After Step 2]  $C_P$ consists of about half of the neurons in $C$. $C$ and not $C_P\Rightarrow$ {\sc Operational} 
\item[Steps 3 and 4]  $\L(C_P,A_P); \L(C_P,B_P)$
\item[After Step 4]  $A_P,B_P\Rightarrow$ {\sc L-operational} synapses in {\sc  Parent} memory state; \\ $C_P\Rightarrow$ {\sc P-Operational} (synapses from $A$, $B$ double their strength).
\end{itemize}

The {\em operation} of $\PJ(A,B)$ is a more elaborate version of that of JOIN:
\begin{enumerate}
\item If both inputs $A$ and $B$ fire simultaneously, then $\PJ(A,B)$ will operate as an ordinary JOIN.  
\item  As soon as one of $A$ and $B$ fires --- suppose that $A$ fires --- then $C_P$ fires (because of the doubled synaptic strengths), an event that will cause $B_P$ to fire downwards in the next step.  Notice that $A_P$ will not fire as a result of $C_P$ firing, because it is in a refractory state.  Also, after the firing of $A$, the neuroids in $C$ double the strength of the synapses coming from $B$ (that is, from neuroids that do not come from a parent and are refractory), and all of $C$ enters a total state called {\sc Predicting $B$}.  Thus, if henceforth the predicted item $B$ fires, all of $C$ will fire.  

\item $C_P$'s firing will cause $B_P$ to fire downwards in the next step; $A_P$ does not fire as a result of $C_P$ firing, because it is in a refractory state.  After this, $C_P$ enters a {\sc Passive} total state, in which it will ignore firings of the $E_P$ part of any item $E=\PJ(C,D)$ further above.  Again, this is accomplished by setting the strength of these synapses (which are the ones in the {\sc Parent} memory state) to zero.

\item If henceforth the predicted item $B$ fires, then $C$ fires and all of $A,B,C$ go back to the {\sc Operational} ({\sc P-operational} for $C_P$) total state.  

\item Finally, if one of $C$'s parents (by which we mean any item of the form $\PJ(C,D)$) fires ``downwards,'' then 
the neurons in $C_P$ will fire (we think of them as firing ``downwards''), thus propagating the search for the predicted items. 
\end{enumerate}

The state diagram of the desired operation of $\PJ(A,B)$ is shown in Fig. \ref{fig:pjoin-states}.

\begin{figure}[h]
\centering 
\includegraphics[width=4in]{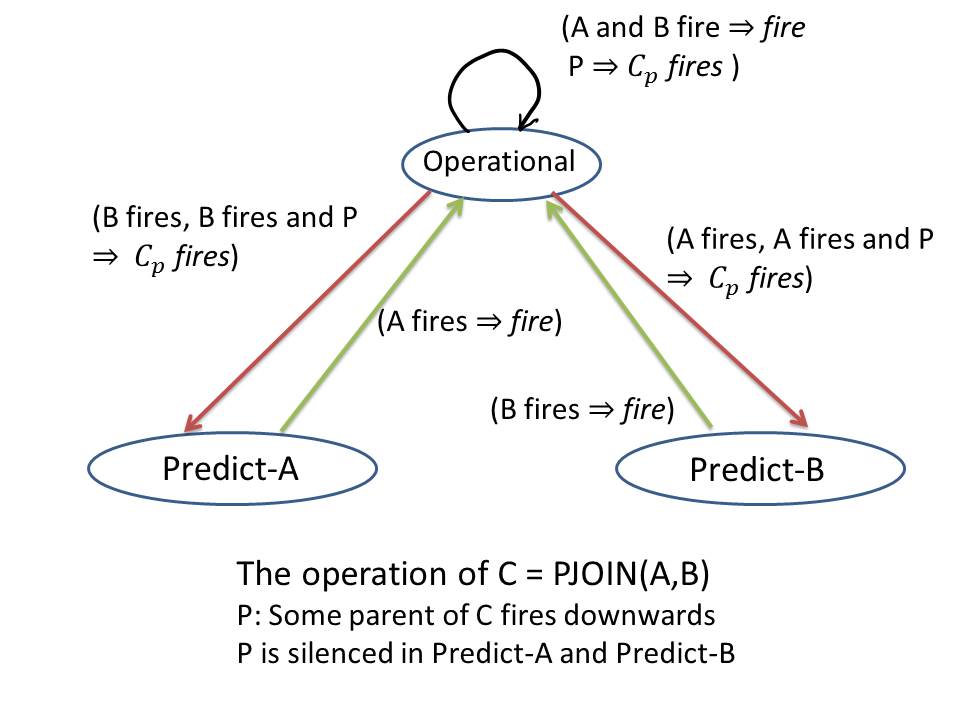}
\caption{The {\em operational} state transition diagram for an item created by PJOIN}
\label{fig:pjoin-states}
\end{figure}

\begin{theorem}
There is a vicinal algorithm that creates in four steps an item $\PJ(A,B)$ operating as specified by the diagram in Figure \ref{fig:pjoin-states}.
\end{theorem}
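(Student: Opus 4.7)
The plan is to assemble the theorem from three building blocks that are already available: Valiant's JOIN algorithm (Steps 1--2), Valiant's LINK algorithm (Steps 3--4), and a small local "coin flip" that partitions $C$ into $C_P$ and $C\setminus C_P$. I would first observe that Steps 1 and 2 are literally the two-step vicinal JOIN recalled earlier in the paper, so after Step 2 we have an item $C$ of about $r$ neuroids in state \textsc{Operational}, each with incoming synapses from $A$ and $B$ of total strength $T/2$ apiece. I would then argue that the selection of $C_P$ requires no communication: each neuroid in $C$ can consult a static local bit (e.g. a parity of a neuroid identifier, or a layer-dependent flag) and transition its own memory either to a new marker \textsc{Predictive} or stay in \textsc{Operational}. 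Because this update uses only local state, it falls within the $\delta$ update function permitted by the neuroidal model, and by a standard Chernoff argument $|C_P|\approx r/2$ with high probability.

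Next I would verify Steps 3 and 4. Each is an instance of the LINK algorithm of the previous section, applied to a source item ($C_P$) and a target item ($A_P$ or $B_P$). The two LINKs can be scheduled in parallel provided the relay populations are disjoint, which is automatic in the random-graph substrate since each LINK recruits its own intermediate relay neuroids; the only care needed is to mark the established relay synapses with the \textsc{Parent} tag and to double their working strength (which is a purely local update of $\lambda$ at each relay-to-target synapse). At the end of Step 4 the synapses from $A$ and $B$ into neuroids of $C_P$ are doubled, so that any single firing of $A$ or of $B$ alone produces a total input of at least $T$ at each $C_P$ neuroid. This is the key quantitative check: the JOIN construction guarantees $W_i^A, W_i^B \ge T/2$ at each $i\in C$, so doubling gives the required $\ge T$.

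With these structural invariants in hand, I would then walk through the five edges of the state diagram in Figure~\ref{fig:pjoin-states} and confirm each one separately. (i) If $A$ and $B$ fire simultaneously, the original JOIN thresholds fire all of $C$ (including $C_P$), exactly as in Valiant. (ii) If only $A$ fires, only $C_P$ crosses threshold (the un-doubled synapses on $C\setminus C_P$ give only $T/2$); the $\lambda$ rule then doubles the $B$-synapses on $C\setminus C_P$ and moves $C$ into \textsc{Predicting\,$B$}, so that a later firing of $B$ alone suffices to fire all of $C$. (iii) The same $C_P$ firing travels through the LINK to $B_P$; the refractory period established on $A$ after its firing blocks the symmetric link to $A_P$, which is precisely the asymmetry needed. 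After propagating, $C_P$ zeroes its \textsc{Parent} strengths and enters \textsc{Passive}, preventing an immediate echo from above. (iv) The subsequent arrival of $B$ completes the JOIN and resets $C$, $A$, $B$ to \textsc{Operational}. (v) Downward firings from a parent $\PJ(C,D)$ arrive via \textsc{Parent} synapses and fire $C_P$, continuing the breadth-first descent.

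The main obstacle, and the part deserving the most attention, is the bookkeeping of \emph{timing and refractoriness}: we must show that the refractory marker set after each firing is strong enough to suppress the "echo" from $C_P$ back to $A_P$ (and from a parent back down to the same subtree on the next tick), yet short enough not to block the legitimate completion firing in case (iv). Since all these interactions involve at most three or four consecutive synchronized ticks, a single refractory step of the form already described in the Refraction subsection (multiplying $T$ by $2$ and halving on the next tick) suffices, which is why the construction remains within the vicinal regime and terminates in four steps as claimed. \qed
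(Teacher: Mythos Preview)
Your proposal is correct and follows essentially the same two-part structure as the paper's own proof: first argue that each step of \textbf{Create $\PJ(A,B)$} is vicinal (reducing to the already-established JOIN and LINK plus purely local $\delta/\lambda$ updates for the $C_P$ selection and the \textsc{Parent}/strength-doubling bookkeeping), then verify the transitions of Figure~\ref{fig:pjoin-states} one by one. Your write-up is in fact more explicit than the paper's---you supply the quantitative threshold check ($W_i^A,W_i^B\ge T/2$, so doubling on $C_P$ reaches $T$) and spell out the refractory timing that keeps the $C_P\to A_P$ echo from firing, both of which the paper leaves implicit. One small wording slip: Steps~3 and~4 are not ``each an instance of LINK''; rather, the two LINKs (each a two-step operation) run in parallel across Steps~3--4, as you yourself note in the next sentence.
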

\begin{proof}
We must argue two things:  First, that the algorithm {\bf Create $\PJ(A,B)$} above is vicinal, that is, it can be implemented within the model described in Section 2 through appropriate state transformation functions $\delta$ and $\lambda$.  And second, that the resulting item responds to firings of other items as specified by the diagram.

For the creation part, and since we assume that JOIN and LINK are implemented correctly, all we need to show is that the conditions that must fold after the second and after the fourth step of the algorithm {\bf Create} $\PJ(A,B)$ can be implemented vicinally.  The former is the identification of a random subset $C_P$ of $C$, which is straightforward as discussed.   The latter consists of (a) the {\sc Parent} memory state of the synapses from $C_P$ to $A_P$ and $B_P$, and the doubling of the strengths of these same synapses.  Both (a) and (b) can be accomplished by the function $\lambda$ of the neuroids right after the creation of $\L(C_P.A_P)$ and $\L(C_P.B_P)$.

To argue that the resulting $\PJ(A,B)$ operates as specified in the diagram, it sufficed to check each transition It is easy to see that the transition graph is complete).  For example, for the transitions out of the {\sc Operational} state:  The ``$A$ and $B$ fire'' self-loop follows from the fact that, in this case, PJOIN operates exactly as JOIN.  The ``P'' self-loop is also immediate, since, for each parent $D$ of the item, the synapses coming from $D_P$ will cause the item to fire.  And so on for the other transitions. 
\end{proof}

\paragraph{Remark:  Control and Synchrony.}
Vicinal algorithms should run with minimum control and synchrony.  By {\em control} we mean firings initiated by outside events, other than the firing of neuroids explicitly mentioned in the algorithm.  The only control needed in our vicinal algorithms so far is the initiation of the firings of $A$ and $B$, in consecutive steps, during the creation of $\J(A,B)$.   In Section \ref{sec:reciprocity} we shall discuss briefly how all control necessary for vicinal algorithms can actually be implemented vicinally.

Similarly, expecting billions of neurons to operate in precise lockstep is not realistic, and so vicinal algorithms should not require much synchrony.  The only synchronization needed in our algorithms, besides the synchronous firing of all neurons of an item in response to the neurons of another item firing (a necessary assumption for Valiant's model, and whose plausibility  we discuss in Section \ref{sec:reciprocity}), is between the firing of two items in the two steps of the creation of $\J(A,B)$ and the two steps of $\L(A,B)$ (and we shall soon argue that the latter one may not be necessary).


\section{Pattern Memorization and Recognition}\label{sec:learn}
\def\zero{\hbox{\bf 0}}
\def\one{\hbox{\bf 1}}
Cortical activity acquires meaning only when it interacts with the world via appropriate sensors and actuators.
In this section we show that the primitive $\PJ$ enables a neurally plausible algorithm for memorizing and recognizing patterns.  

\paragraph{Sensing the environment.}
By {\em pattern} we mean a binary vector with $n$ coordinates, that is, a member of $\{0,1\}^n$ for some parameter $n$.  Visual patterns are only one of the many possibilities.
In particular, we shall assume that there are $n$ special items $S_1,\ldots,S_n$ called {\em sensors}.  Each $S_i$ has two special memory states, {\sc Zero} and {\sc One}, so that the memory states of all $n$ sensors ``spell'' the pattern.

Associated with each sensor $S_i$ there are two other items which we denote by $\zero_i$ and $\one_i$, such that:
\begin{itemize}
\item The states of the $n$ sensors will remain unchanged during a period that we call {\em pattern presentation}.   During such presentation, each sensor fires at most one --- either spontaneously at a random time, or when predicted.


\item The only activity of the neuroids of sensor $S_i$ is the aforementioned firing, as well as the occasional change in memory state when a different pattern is presented. 
\end{itemize}

The memory states of the $n$ sensors typically remain unchanged from one time step to the next, displaying a pattern $x\in \{0,1\}^n$.  A maximum time period during which these memory states are unchanged is called a {\em presentation} of the corresponding pattern.  {We assume that, during such a presentation, the $n$ sensor items fire spontaneously at random, each at most once.}  After the presentation of one pattern concludes, the presentation of a different pattern begins, and so on.

\paragraph{Memorization and recognition.}

We say that pattern $x\in \{0,1\}^n$ has been {\em memorized} if, during its first presentation, a hierarchical structure of items is eventually formed with one top item $I(x)$ (that is, an item that does not participate in further PJOINs) which we think of  as ``representing'' the presented pattern, in that, if $x$ and $y$ are different patterns, $I(x)\neq I(y)$.  Notice that $x$'s first presentation may follow the presentation of many other patterns $y,z,w\ldots$, and so its memorization structure will be ``built'' on top of existing memorization structures of the other patterns.  Now if, in any subsequent presentation of the same pattern $x$, this special item $I(x)$ fires and is the highest-level item that fires, we  say that the pattern has been {\em recognized}.

\paragraph{Learning with PJOIN.}

Before describing the algorithm for learning, we make a few remarks on sensor items.
\begin{itemize} 
\item If sensor $S_i$ fires in state {\sc One}, item $\one_i$ fires next; similarly, if it fires in state {\sc Zero,}, $\zero_i$ fires.  
This can be achieved, for example, by LINKing $S_i$ to both $\zero_i$ and $\one_i$, and appropriately manipulating the strengths of the corresponding synapses according to the memory state ({\sc Zero} or {\sc One}) of $S_i$.  

\item The items $\zero_i$ and $\one_i$ participate in PJOINs with other items, and thus they form the``basis'' of the $\PJ$ structure which will be eventually formed in response to the current pattern presentation. By ``basis'' we mean that these are the only items in the structure that are not themselves $\PJ$s of other items (see Figure 2).  

\item There is one difference between these basis items and the other $\PJ$ items in the structure:   As we have seen, if $C=\PJ(A,B)$ then its predictive part, $C_P$, synapses to $A_P$ and $B_P$.  In contrast, if $C=\one_i$ or $\zero_i$ then $C_P$ synapses to the corresponding $S_i$ item.  Thus, every time  $\one_i$ or $\zero_i$ ``fires downwards'' $S_i$ fires next, completing the chain of predictions.  Importantly, it does so without refraction (because it may have to fire in the next step to verify the prediction).

\item If and when a sensor item is predicted by a parent $\PJ$, its sampling weight is increased, multiplied by a factor $M>1$.
\end{itemize}

Memorization and recognition of a presented pattern is achieved by the algorithm shown in Figure \ref{fig:learn}.  
We say that an item is {\em $\PJ$-eligible} if at least $D$ time steps have elapsed after it has fired, and none of its parents has fired for the past $D$ steps, where $D>0$ is a parameter.  Define the {\em level} of an item to be zero if out is an item of the form $\zero_i$ or $\one_i$; and otherwise, if it is $\PJ(A,B)$ to be one plus (recursively) the largest of the levels of $A$ and $B$.  In our algorithm, we allow the delay $D$ associated with an item to depend on the level of the item.  The selection of pairs to be PJOINed is as follows:  PJOIN-eligible items are sampled, each with probability $q$.  
Note that, even though the algorithm of Figure \ref{fig:learn} is depicted as running synchronously, the only synchrony that is necessary for its correctness is between the neuroids of the same item, and between pairs of items being joined.  Furthermore, the only control (firings of neuroids initiated by events outside the algorithm, that is, by events other than firings of other neuroids), is the spontaneous firing of sensor items and the selection of PJOIN-eligible pairs of items to be PJOINed, and the initiation of the creation of the PJOIN.  We shall discuss in the next section how such rudimentary control can be accomplished neuroidally.

\begin{figure}
\fbox{\parbox{\textwidth}{
LEARN(pattern $x$):\\[0.1in]
Initialization: set the memory state of item $S_i$ to {\sc Zero} if $x_i = 0$ and to {\sc One} if $x_i = 1$.\\[0.05in]
Repeat the following:
\begin{enumerate} 
\item  Sample sensor items which have not yet fired to fire next (with probability $p$).
\item  Sample  PJOIN-eligible pairs of items $(A,B)$ to initiate the creation of $C=\PJ(A,B)$ (with probability $q$).
\item  Carry out one step of all neuroids.
\end{enumerate}
until no neuroid fires or changes state.}}

\caption{Pattern memorization and recognition: sensor items and $\PJ$-eligible pairs are sampled}
\label{fig:learn}
\end{figure}

We now turn to establishing the correctness of the algorithm.  To state our main result of this section, we first review the pertinent parameters.  $p$ is the sampling probability of the sensors, and $q$ the probability of sampling a PJOIN-eligible item for PJOINing.  $D$ is the delay after firing for an item to become PJOIN-eligible; we assume that $D=2\ell+2$.  Patterns $x_1,\ldots,x_m \in \{0,1\}^n$ are presented with repetitions in some arbitrary order, and each presentation lasts at least $T$ steps, where $T\geq 4(\log n)/p$.  

\begin{theorem}\label{thm:learn}
Suppose that patterns $x_1,\ldots,x_m \in \{0,1\}^n$ are presented with repetitions in some arbitrary order, with each presentation lasting at least $T\geq 4\log n + (2 \log n)/p$ steps; during each presentation, sensors are sampled with probability $p$, while eligible pairs are sampled with probability $q$; and the eligibility delay $D$ satisfies $D\geq 2\ell+2$ for items at level $\ell$.  Then, with high probability, 
\begin{itemize}
\item At the first presentation of each pattern $x_i$, one particular highest-level item $I(x_i)$ is created, and during all subsequent presentations of $x_i$ this item fires.  
\item Items $I(x_1),\ldots,I(x_n)$ are all distinct.  
\item  Furthermore, after each pattern is presented $O(\log n + \log m)$ times, the $I(x_i)$ items do not participate in the creation of PJOINs.
\end{itemize}
\end{theorem}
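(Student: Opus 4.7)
The plan is to decompose the three bullets by first analyzing a single presentation of a pattern and then lifting this analysis to the sequence of presentations. The key invariant to establish is: at the end of the first presentation of a pattern $x$, a unique top item $I(x)$ has been created, and in every subsequent presentation the existing hierarchy under $I(x)$ fires upward from the sensors so that $I(x)$ fires and no higher item is built on top of it. The entire argument decomposes into (i) sensor firing, (ii) hierarchy assembly by induction on level, (iii) upward recognition cascade on later presentations, and (iv) a decay argument for stabilization.

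First I would show that within the initial $(2\log n)/p$ steps of any presentation, all $n$ sensors fire. Since each sensor is sampled each step independently with probability $p$, the probability a given sensor fails to fire is $(1-p)^{(2\log n)/p}\le n^{-2}$, and a union bound gives that all sensors fire in this window with probability at least $1-1/n$. Once sensor $S_i$ fires, the corresponding basis item $\zero_i$ or $\one_i$ fires next, seeding the level-$0$ layer of the hierarchy. I would then prove by induction on the level $\ell$ that, in $O(\ell)$ additional steps beyond the sensor-firing phase, a binary PJOIN-tree of height $\ell$ over the basis items has been assembled: the eligibility delay $D=2\ell+2$ is tuned precisely so that a level-$\ell$ item, once created, becomes $\PJ$-eligible just when its intended partner has also completed its creation and refractory window. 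At each such round the sampling probability $q$ ensures that a constant expected fraction of eligible pairs get PJOINed, so after $\log n$ rounds a single top item $I(x)$ remains, and the total elapsed time fits in $4\log n+(2\log n)/p$.

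For the second and third assertions I would track the upward cascade during a re-presentation of $x$: the same sensors fire, the same $\zero_i/\one_i$ items fire, and then at each level the corresponding existing $\PJ(A,B)$ fires because at least one child fires, by the PJOIN rule that predicts the missing input and verifies it by the simultaneous bottom-up activation of its sibling's subtree. Since firings now occur inside the existing structure, the items involved are refractory and therefore not $\PJ$-eligible during the cascade, so no new parent is built over $I(x)$ in this recognition mode. Distinctness of $I(x_1),\ldots,I(x_m)$ follows from the structural fact that the leaves of $I(x_i)$ are exactly the basis items dictated by the bits of $x_i$; patterns differing in coordinate $j$ disagree on whether $\zero_j$ or $\one_j$ appears, and because the disjoint-items model assigns disjoint neuroids to $\zero_j$ and $\one_j$, the resulting top items cannot be equal.

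The main obstacle, and the subtlest part of the argument, is the last bullet: bounding the number of presentations needed for $I(x_i)$ to stop participating in new PJOINs. I would estimate the probability that in a single presentation a new $\PJ$ is initiated with $I(x_i)$ as an operand; because such a new parent then fires on every subsequent presentation of $x_i$, each successful new PJOIN above $I(x_i)$ permanently removes some future eligibility window. This yields a supermartingale-style decay in the expected number of fresh PJOINs above $I(x_i)$ per presentation, and after $O(\log n + \log m)$ presentations a union bound over the $m$ patterns and the at most $n$ levels of the hierarchy drives this below $1/\mathrm{poly}(n,m)$. I expect the hardest technical step to be justifying that the delay schedule $D=2\ell+2$ simultaneously prevents two hierarchies built for different patterns from merging at their apex during interleaved presentations, while still leaving enough $\PJ$-eligible opportunities for the initial tree of any fresh pattern to complete inside one presentation.
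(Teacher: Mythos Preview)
Your treatment of sensor firing and the height bound is broadly compatible with the paper's, but you misread the purpose of the delay $D\geq 2\ell+2$. The paper does not use $D$ to synchronize an item with its ``intended partner'' during creation; $D$ is a \emph{recognition-phase} device. When an existing item $A$ at level $\ell$ fires and $A$ has a parent $B$ that is valid for the current pattern, $B$ predicts $A$'s sibling, which (being at level at most $\ell$) fires within $2\ell+2$ steps, so $B$ fires before $A$'s eligibility delay expires. That is what keeps $A$ from forming a redundant new PJOIN when it already has a parent that will fire. Your sentence ``items involved are refractory and therefore not PJOIN-eligible during the cascade'' conflates the one-step refractory period with the $D$-step eligibility delay, and it overlooks the real source of trouble: the items that \emph{do} become eligible on a re-presentation of $x$ are precisely those valid for $x$ whose parents are \emph{not} valid for $x$ --- items inherited from other patterns' trees --- together with $I(x)$ itself.

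This misidentification produces a genuine gap in your stabilization argument. Tracking a supermartingale on ``new parents of $I(x_i)$'' is the wrong quantity, because growth does not happen only at the apex: at every re-presentation, orphaned items at \emph{any} level can form fresh PJOINs among themselves, and these build upward and may eventually sit above $I(x_i)$. The paper's argument is instead level-by-level: after every pattern has been seen once, no new level-$1$ PJOIN is ever created (each sensor already has, for each pattern, a parent that fires); after a second full round, no new level-$2$ PJOIN; and so on. Termination then follows from a counting bound showing at most $mn/2^{\ell}$ items can exist at level $\ell$, so the maximum level is $O(\log n+\log m)$ and that many rounds suffice. Your proposal neither bounds the maximum level nor controls the low-level orphan PJOINs that actually drive the process, so the final bullet does not go through as written.
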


\noindent
{\bf Remark 1.}
This consistent behavior, of essentially creating one $\PJ$ tree for each pattern (with trees having distinct roots but sharing subtrees),  is not achievable using only $\J$.  If mere $\J$s were used, then on each presentation of a pattern of length $n > 2$, a new $\J$ tree would be created, with no consolidation of prior knowledge.
Another benefit of $\PJ$ is that, with it, the needed length of subsequent presentations of an item is decreased, since predictions are likely to accelerate considerably the rate with which sensors fire. 

\noindent
{\bf Remark 2.} It is possible that, upon a presentation of an item which has already been memorized, new items may be created as PJOINs of old items. These new items should not necessarily be considered as undesirable ``bugs;'' they may be seen as alternative ways of parsing --- of understanding --- an existing pattern.

Turning to the proof of the theorem, we start with a bound on the maximum level of $\PJ$ items for a single pattern.
\begin{lemma}
Under the conditions of the theorem, with high probability, the maximum level of an item  is $O(\log n)$. 
\end{lemma}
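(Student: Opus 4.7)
Plan: The plan is a time-budget argument driven by the level-dependent eligibility delay $D = 2\ell + 2$.

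Fix a newly-created item $v$ at level $\ell$ and descend along a root-to-leaf path in its PJOIN tree, at each step choosing the child whose level is exactly one below the parent's (such a child always exists, since $\ell(\PJ(A,B)) = \max(\ell(A),\ell(B)) + 1$). This yields a chain $v = u_\ell, u_{\ell-1}, \ldots, u_0$ with strictly decreasing levels down to a basis item. Let $t(u_i)$ denote the creation time of $u_i$, i.e.\ the first moment all of its neuroids fire. For $u_i$ to be PJOIN-eligible at the instant $u_{i+1}$ is created, the eligibility condition forces $u_i$ not to have fired during the preceding $D(u_i) = 2i+2$ steps; since $u_i$ does fire at $t(u_i)$, we get $t(u_{i+1}) - t(u_i) \geq 2i+2$. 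Telescoping along the chain,
\[
t(v) - t(u_0) \;\geq\; \sum_{i=0}^{\ell-1}(2i+2) \;=\; \ell(\ell+1) \;=\; \Omega(\ell^2).
\]

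These events must fit within a single presentation of length $T = O(\log n / p)$: across presentation boundaries, firings propagating upward from the next pattern's sensor items into the existing PJOIN hierarchy re-fire the intermediate $u_i$'s and reset their eligibility windows, so the $\Omega(\ell^2)$ budget cannot be accumulated across presentations. This forces $\ell^2 = O(T) = O(\log n)$ with $p$ constant (or polylogarithmic), giving $\ell = O(\sqrt{\log n}) = O(\log n)$, the bound claimed.

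The main obstacle is rigorously justifying the single-presentation claim: one must rule out the degenerate case in which the chain $u_0,\ldots,u_\ell$ quietly accumulates its $\Omega(\ell^2)$ waiting budget across many presentations without any of the $u_i$ being disturbed. I would handle this by observing that each $u_i$'s subtree of basis-leaf items has non-trivial intersection, with high probability, with the basis items activated by any fresh pattern presentation --- in particular, by the structure imposed during sampling with probabilities $p$ and $q$ --- so that each $u_i$ re-fires at least once per intervening presentation and its eligibility clock is reset. A standard concentration bound over the independent sensor samplings should close this gap and yield the claimed bound with high probability.
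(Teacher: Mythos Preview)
Your approach is entirely different from the paper's, and it has a genuine gap that I do not think you can close without essentially re-deriving the paper's bound.

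The paper's proof ignores timing and delays completely. It observes that, on a fresh presentation, $n$ sensory items undergo $n-1$ pairwise PJOINs in some random order; fixing one sensory leaf $s$, it lets $X_j$ indicate whether the (unique) current ancestor of $s$ is one of the two items merged at step $j$. The height above $s$ is then $X=\sum_j X_j$, with $\E(X)\le H_{n-1}<1+\ln(n-1)$, and a Chernoff bound gives $X=O(\log n)$ with high probability. Nothing about $D$, $T$, or $p$ enters.

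Your time-budget argument telescopes correctly to $t(u_\ell)-t(u_0)\ge \ell(\ell+1)$, but the step ``these events must fit within a single presentation of length $T=O(\log n/p)$'' is where it breaks. The theorem hypothesis is that each presentation lasts \emph{at least} $T\ge 4\log n+(2\log n)/p$; this is a lower bound, not an upper bound, so it gives you no ceiling on $t(u_\ell)-t(u_0)$. The only place in the paper where an \emph{upper} bound on the active duration of a presentation appears is inside the proof of Theorem~\ref{thm:learn}, and that bound (``the additional number of rounds after all sensors fire is at most the depth of the tree, i.e., at most $4\ln n$'') explicitly invokes the $O(\log n)$ depth you are trying to prove. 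Using it would be circular.

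The cross-presentation issue you flag is therefore not a side obstacle but the entire difficulty: without an independent upper bound on how long a chain of creations can run, the inequality $\ell(\ell+1)\le (\text{something})$ has no right-hand side. Your proposed fix --- that every $u_i$'s leaf set intersects each new pattern's active sensors with high probability --- is not true in general (two patterns can differ in a single coordinate, or an item's leaf support can lie entirely in the coordinates where two patterns agree, etc.), and even where it holds it does not by itself reset the chain's budget in the way you need. The paper's random-merging argument sidesteps all of this by never looking at time.
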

\begin{proof}
Suppose $n$ sensory items are PJOIN-eligible, so that $n-1$ PJOINs happen. Order the PJOINs $1,2,\ldots, n-1$. Fix a sensory item $s$ and let $X_j$ be the $0$-$1$ random variable that indicates whether $s$ participates in the $j$'th PJOIN. Then the height of $s$ at the end of the process is 
$X = \sum_{j=1}^{n-1} X_j$. From this we have,
\[
\E(X) \le \sum_{j=1}^{n-1} \frac{1}{n-j} = H_{n-1} < 1+ \ln (n-1).
\]
The claim follows from the Chernoff bound for independent Bernoulli random variables (with possibly different expectations):
\[
\Pr(X - \E(X) > t \E(X)) \le \left(\frac{e^{t}}{(1+t)^{(1+t)}}\right)^{\E(X)} \le (e+1)^{-\E(X)}
\]
where we used $t=e$.
\end{proof}

\begin{proof}[Proof of Theorem \ref{thm:learn}]
First we note that any item at any level fires if and only if all its descendants fire in response to an input pattern. This is clear since an item 
fires if and only if both its children fire or it is an input level item activated by an input symbol. In the former case, this firing criterion recursively applied, leads to the set of all descendants of the original item, i.e., an item fires iff all its input level descendants fire.
Thus, if the descendants of an item $C$ are exactly the set of sensory items for a particular pattern $x$ then $C$ fires if and only if $x$ is presented. 

Next we claim that with high probability, when a new pattern $x$ is encountered, the algorithm creates a new highest level cell to represent this pattern, i.e., the first presentation of the pattern $x$ creates an item $I(x)$ whose descendants include all $n$ sensory items that fire for $x$. After all possible processing of items according to the operational PJOIN state transition rules (see Fig. \ref{fig:pjoin-states}), if there is currently no item whose descendants match the pattern $x$, there must be a set of items $S$ that are PJOIN-eligible. New items are then created by sampling a pair of items from $S$ performing a PJOIN and adding the resulting item to $S$. This process finishes with a single item after $|S|-1$ such PJOIN's. The descendants of this item correspond to the pattern $x$. 

Now consider the presentation of a pattern $x$ for which we already have a top-level item $I(x)$. All items whose descendants match the pattern will eventually fire. Thus, the original highest-level item for $x$ will eventually fire, as claimed. En route, it is possible that two items that have fired, but none of whose parent items have fired as yet, will try to form a new PJOIN.  First we bound the total number of rounds. Independent of the PJOIN creation, in each round, each sensor fires with probability $p$. Thus, with high probability, all $n$ sensor items fire within $(2\ln n)/p$ rounds. The additional number of rounds after all sensors fire is at most the depth of the tree, i.e., at most $4\ln n$ whp. So the total number of rounds, again with high probability, is at most $4\ln n+(2\ln n)/p$. Thus our bound on $T$ is adequate. 

Next,  on a presentation of $x$, consider any PJOIN item $A$ with a parent $B$ that is a descendant of $I(x)$. After item $A$ fires, it will become PJOIN-eligible after $D$ steps. But after $A$ fires, its parent $B$ predicts its sibling. The sibling is at level at most $\ell(A)$, and therefore it will fire within at most $2\ell(A)+2$ steps.
Therefore such an item $A$  will not be PJOIN-eligible after $D$ steps since its parent would have fired.


Now consider the point where all sensor items have been sampled and existing PJOIN items that will fire, including $I(x)$ have fired. There might be items that have fired but whose parents did not fire, items that are valid for $x$, but were created during the presentation of a different pattern $y$. These items can now form new PJOINs. However, after seeing each of the $m$ input patterns at least once, no new PJOIN items will be created at level 1. This is because each sensory item, for each pattern, has at least one PJOIN parent. After another round of seeing all $m$ input patterns, no PJOIN items are created at level 2.  If we  bound the maximum level of items, then this will bound the maximum number of rounds of seeing all $m$ patterns after which no new $\PJ$ items are created. 

In fact, the maximum level of an item will be $O(\log n + \log m)$. To see this, we we can either use Lemma \ref{lem:max-pjoins} or the following more direct argument (with a slightly weaker bound) to conclude that the maximum number of items at level $\ell$ is at most $mn/2^{\ell}$. Fix a pattern $i$ and suppose the first pattern and the $i$'th pattern share an $r$ fraction of the sensor items. Then at level $1$, some fraction of these $r n$ sensors on which they agree will form PJOINs among themselves. The sensors involved in such PJOINs will no longer form new PJOINs on presentation of patterns $1$ or $i$, since they already have PJOIN parents that will fire. The remaining common sensors could form new PJOINs that are valid for both $1$ and $i$ in later presentations of $1$ or $i$. But no matter how many such PJOINs (among the common items) are formed on each presentation, the total number that can be valid for both $1$ and $i$ at level $1$ is at most $r n/2$, since once a sensor participates in one such PJOIN (where the other sibling is also from the common set of r.n sensors), it will no longer do so. For all $m$ patterns, the number of PJOINs at level $1$ valid for pattern $1$ and any other pattern is at most $mn/2$. 

We continue this argument upward. Of the items at level $1$ (at most $rn/2$) that are valid for both $1$ and $i$, on a presentation of $i$, some pairs could form new PJOINs that are valid for both $1$ and $i$, if they do not already have such PJOIN parents with both children valid for both patterns, but no matter what order this happens, there can be at most $rn/4$ such PJOINs at level 2. This continues, with the number of $\PJ$ items halving at each level. Therefore, after $O(\log n + \log m)$ rounds of seeing all input patterns (in arbitrary order, and any positive number of times in each round), no further PJOINs are created and the top-level items are stable for each pattern.
\end{proof}

The next lemma gives a sharp bound on the number of PJOIN items valid for two different patterns. 
\begin{lemma}\label{lem:max-pjoins}
Let $r_i = 1 - {1 \over n} H(x_1,x_i)$ be the fraction of overlap between two patterns $x_1$ and $x_i$, where 
$H$ is the Hamming distance.
The expected number of PJOINs created at level $\ell$ that are valid (i.e.,  will fire) for patterns $x_1$ and $x_i$ is 
$(n/2)(r_i^4 + r_i^2(1-r_i^2)(1 + \frac{1}{(1+r)^2}))$ for $\ell =1$ and 
\[
\frac{n}{2^\ell} \cdot \left(r_i^{2^{\ell+1}} + r_i^{2^\ell}(1-r_i^{2^{\ell}}) \cdot \left(1 + \left(\frac{1+\left(\frac{1+ \left( \ldots \right)^2}{1+r_i^{2^{\ell-1}}} \right)^2}{1+r_i^{2^\ell}}\right)^2   \right)\right) < \frac{n}{2^{\ell}}.
\]
for higher $\ell$. Moreover, whp, the number of such PJOINs will be within a factor of 4 of this bound.
\end{lemma}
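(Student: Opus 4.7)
The plan is to organize the count level-by-level, by viewing pattern $x_i$'s presentation as constructing a binary tree $T_i$ on top of the already-built tree $T_1$ for pattern $x_1$. For any level $\ell$, an item is valid for both patterns iff all $2^\ell$ of its sensor descendants lie in $F_1\cap F_i$, the set of size $r_i n$ where the two patterns agree. I will write $N_\ell = R_\ell + C_\ell$, where $R_\ell$ counts items inherited from $T_1$ (those $T_1$-items at level $\ell$ whose leaves all lie in $F_1\cap F_i$) and $C_\ell$ counts items newly created during $x_i$'s presentation that happen to be valid for both.

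Computing $R_\ell$ is direct: $T_1$ has $n/2^\ell$ items at level $\ell$, each valid for both with probability $r_i^{2^\ell}$, giving $R_\ell = (n/2^\ell)\, r_i^{2^\ell}$. For $C_\ell$ I analyze $x_i$'s presentation carefully. Because a new $\PJ$ only absorbs items whose parent is not currently firing, the pool of level-$(\ell{-}1)$ items eligible to be joined at level $\ell$ during $x_i$'s presentation is the union of (a) reused $T_1$-items at level $\ell{-}1$ whose $T_1$-sibling is itself not valid for both (so their $T_1$-parent fails to fire for $x_i$), giving $(n/2^{\ell-1})\, r_i^{2^{\ell-1}}(1-r_i^{2^{\ell-1}})$ items, all valid for both, and (b) all $(n/2^{\ell-1})(1-r_i^{2^{\ell-1}})$ new level-$(\ell{-}1)$ items produced within $T_i$, of which $C_{\ell-1}$ are valid for both by the inductive hypothesis. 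Random pairing inside this pool gives $C_\ell = \tfrac12\,(\text{pool size})\,(\text{valid fraction})^2$, and setting $c_\ell := C_\ell/(n/2^\ell)$ yields the recursion
\[
c_\ell \;=\; \frac{\bigl(c_{\ell-1} + r_i^{2^{\ell-1}}(1-r_i^{2^{\ell-1}})\bigr)^2}{1-r_i^{2^\ell}},\qquad c_0 = 0.
\]

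Writing $c_\ell = r_i^{2^\ell}(1-r_i^{2^\ell})\, H_\ell^2$ and using $1-r_i^{2^\ell}=(1-r_i^{2^{\ell-1}})(1+r_i^{2^{\ell-1}})$, this recursion collapses to $H_\ell = (1+H_{\ell-1}^2)/(1+r_i^{2^{\ell-1}})$ with $H_0 = 0$; unrolling two layers of it yields the nested expression in the lemma, and $N_\ell/(n/2^\ell) = r_i^{2^\ell}+c_\ell$ becomes the claimed form after the trivial rewrite $r_i^{2^\ell} = r_i^{2^{\ell+1}} + r_i^{2^\ell}(1-r_i^{2^\ell})$. The base case $\ell=1$ gives $H_1 = 1/(1+r_i)$ and $N_1 = n\, r_i^2/(1+r_i)$, matching the stated closed form. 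The bound $N_\ell < n/2^\ell$ follows because $N_\ell$ counts only a subset of the $n/2^\ell$ items of $T_i$ at level $\ell$, and a short induction via the recursion confirms $r_i^{2^\ell}+c_\ell<1$ whenever $r_i<1$.

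For the high-probability bound within a factor of $4$, I would apply a bounded-differences/Azuma argument along the random pairings used in building $T_i$, conditioning on the outcomes at earlier levels. Each individual pairing changes the valid-for-both count at its level by at most $1$, so Azuma--Hoeffding on the $\Theta(n/2^\ell)$ pairings at level $\ell$ gives concentration within any constant multiplicative factor for every level at which the expectation is $\omega(1)$; the maximum interesting level is $O(\log n)$ by Lemma~1, so only an $O(\log n)$-size union bound is needed. The main obstacle is the careful bookkeeping that sets up the recursion---in particular, correctly isolating which level-$(\ell{-}1)$ items remain $\PJ$-eligible once the $T_1$ structure is in place---and then matching algebraically the resulting closed form to the lemma's nested fraction; concentration itself is comparatively routine once the per-level randomness has been isolated by conditioning.
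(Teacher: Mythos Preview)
Your approach is essentially the paper's: both decompose the count at level $\ell$ into items inherited from the $x_1$-tree plus items newly created during the $x_i$ presentation, identify the pool of PJOIN-eligible level-$(\ell{-}1)$ items and its valid-for-both fraction, and derive a recursion for the normalized count. Your substitution $c_\ell = r^{2^\ell}(1-r^{2^\ell})H_\ell^2$ is exactly the paper's $f(\ell)$ in disguise via $f(\ell)=1+H_\ell^2$, and the two recursions $H_\ell=(1+H_{\ell-1}^2)/(1+r^{2^{\ell-1}})$ and $f(\ell)=1+(f(\ell-1)/(1+r^{2^{\ell-1}}))^2$ are equivalent. The only substantive difference is how the inequality $N_\ell<n/2^\ell$ is obtained: you argue by counting (the valid-for-both items are a subset of the $n/2^\ell$ items that fire for $x_i$ at level $\ell$), while the paper verifies algebraically by induction that $f(\ell)\le 1+1/r^{2^\ell}$; your argument is shorter and arguably more illuminating. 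For concentration, the paper simply invokes independence of the PJOIN choices at each level, whereas you flesh this out via bounded differences plus a union bound over $O(\log n)$ levels; these are the same idea at different levels of detail.
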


\begin{proof}
The proof of the expectation is by induction on $\ell$. Consider two patterns $x_1$ and $x_2$ and let $r$ be the fraction of sensors where they agree. Then on the first presentation of $x_1$, the number of PJOIN items created at level $\ell$ is $n/2^\ell$, and the expected fraction of them that are also valid for $x_2$ will be $r^{2^{\ell}}$ (since all constituent sensor items must be drawn from the common $r$ fraction). Now, on a presentation of $x_2$, there might be more PJOINs created that are valid for $x_1$. At level $1$, such PJOINs will come from sensors that are valid for both, but do not yet participate in PJOINs that are valid for both. The expected fraction of such sensors is $r-r^2 = r(1-r)$.  For each such sensor, to be part of a new PJOIN valid for $x_1$, the other sensor must also be picked from this set. The total fraction of sensors participating in the creation of PJOINS will be $1-r^2$, since an $r^2$ fraction of sensors already have PJOINs that will fire. Thus, the expected number of new PJOINs created at level $1$ will be 
\[
\frac{n}{2} \cdot r(1-r)\cdot \frac{r(1-r)}{1-r^2} = \frac{n}{2} \cdot r^2(1-r^2)\cdot \frac{1}{(1+r)^2}
\]
and thus the total number in expectation at level $1$ is as claimed. 

To compute the expected number of PJOINs created at the next level, we note that the first presentation of $x_1$ creates
$r^4 \cdot (n/4)$  PJOINs that are valid for both $x_1$ and $x_2$. When $x_2$ is presented, the PJOIN items at level one that become PJOIN-eligible are $r^2(1-r^2)(1 + (1/(r+1)^2)\cdot (n/2)$, with the first term coming from PJOINs that were created on the presentation of $x_1$ and the second from new PJOINs created during the first presentation of $x_2$. Thus the number of PJOINs created at level $2$ in expectation will be $n/4$ times 
\[
\frac{\left( r^2(1-r^2)\left(1 + \frac{1}{(1+r)^2}\right)\right)^2}{1-r^4} 
= r^4(1-r^4)\left(\frac{1+\frac{1}{(1+r)^2}}{1+r^2}\right)^2.
\]
Thus, the total number of PJOIN-eligible items at level $2$ in expectation is
\[
\frac{n}{4} \cdot \left(r^4 - r^8 + r^4(1-r^4) \left(\frac{1+\frac{1}{(1+r)^2}}{1+r^2}\right)^2\right) = \frac{n}{4}\left(r^4(1-r^4)f(2)\right).
\] 
Continuing, the expected number of PJOIN eligible items at level $3$ is $(n/8)$ times
\[
r^8(1-r^8) + \frac{\left(r^4(1-r^4)f(2)\right)^2}{1-r^8} = r^8(1-r^8)\left(1 + \left(\frac{f(2)}{1+r^4}\right)^2\right) = r^8(1-r^8)f(3).
\] 
At the $\ell$'th level, the expected number of PJOIN-eligible items is thus $(n/2^\ell)$ times
\[
r^{2^\ell}(1-r^{2^\ell})f(\ell) \quad 
\mbox{ where } f(\ell) = 1+\left(\frac{f(\ell -1)}{1+r^{2^{\ell-1}}}\right)^2.
\]
Adding the number of PJOINs at level $\ell$ created by the first presentation of $x_1$ that are valid for both $x_1$ and $x_2$ and participated in PJOINs valid for both at the next level on the first presentation of $x_1$, which is  
$(n/2^\ell)r^{2^{\ell+1}}$, gives the claimed bound. 

We proceed to show the inequality, namely that the above quantity, $r^{2^\ell}(1-r^{2^\ell})f(\ell)$, is bounded by $1-r^{2^{\ell+1}}$ for any $r \in [0,1]$, by induction on $\ell$. In fact, our induction hypothesis is the following:
\[
f(\ell) \le 1 + \frac{1}{r^{2^{\ell}}}
\]
It holds for $\ell=2$. Assuming the induction hypothesis for $\ell-1$,
\begin{align*}
f(\ell) &= 1+\left(\frac{f(\ell -1)}{1+r^{2^{\ell-1}}}\right)^2\\
&\le 1 + \left(\frac{1 + \frac{1}{r^{2^{\ell-1}}}}{1+r^{2^{\ell-1}}}\right)^2\\
&= 1 + \frac{1}{r^{2^\ell}}.
\end{align*}
Thus, $r^{2^{\ell}}(1-r^{2^{\ell}})f(\ell) \le (1-r^{2^\ell})(1+r^{2^\ell}) \le 1-r^{2^{\ell+1}}$. 

The high probability bound follows from the independence of the PJOINs at each level.
\end{proof} 


\section{Experiments}\label{sec:simulation}

We implemented PJOIN and tested it on binary input patterns. Besides checking for correctness, we measured the total number of items created (size of memory), as well as the total traffic and the number of downward predictions, for different values of pattern length $n$, number of patterns $m$, the number of inputs presentations $T$, the delay $D$, sensor item sampling probability $p$, and PJOIN-eligible item sampling probability $q$. 

While we let each presentation run for as many steps as it took for every sensory item to be sampled, for $100$-size patterns, no presentation lasted more than 50 steps and after a few presentations, this dropped to $15$ or less steps. 
{\em Downward} traffic is indeed a substantial fraction of all traffic and increases with the number of distinct patterns presented. Figure \ref{fig:downward}(a) shows the the percent of downward traffic, as we increase the number of patterns $m$, for different values of pattern size $n$, with $p=q=0.1$ and $T=100$ steps per presentation. Fig \ref{fig:downward}(b) is the downward traffic with higher sensor item sampling probability $p$, keeping $q=0.1$ and $n=40$. These results were obtained without any restriction on the level difference between two items being PJOINed.

\begin{figure}
\includegraphics[width=8.5cm]{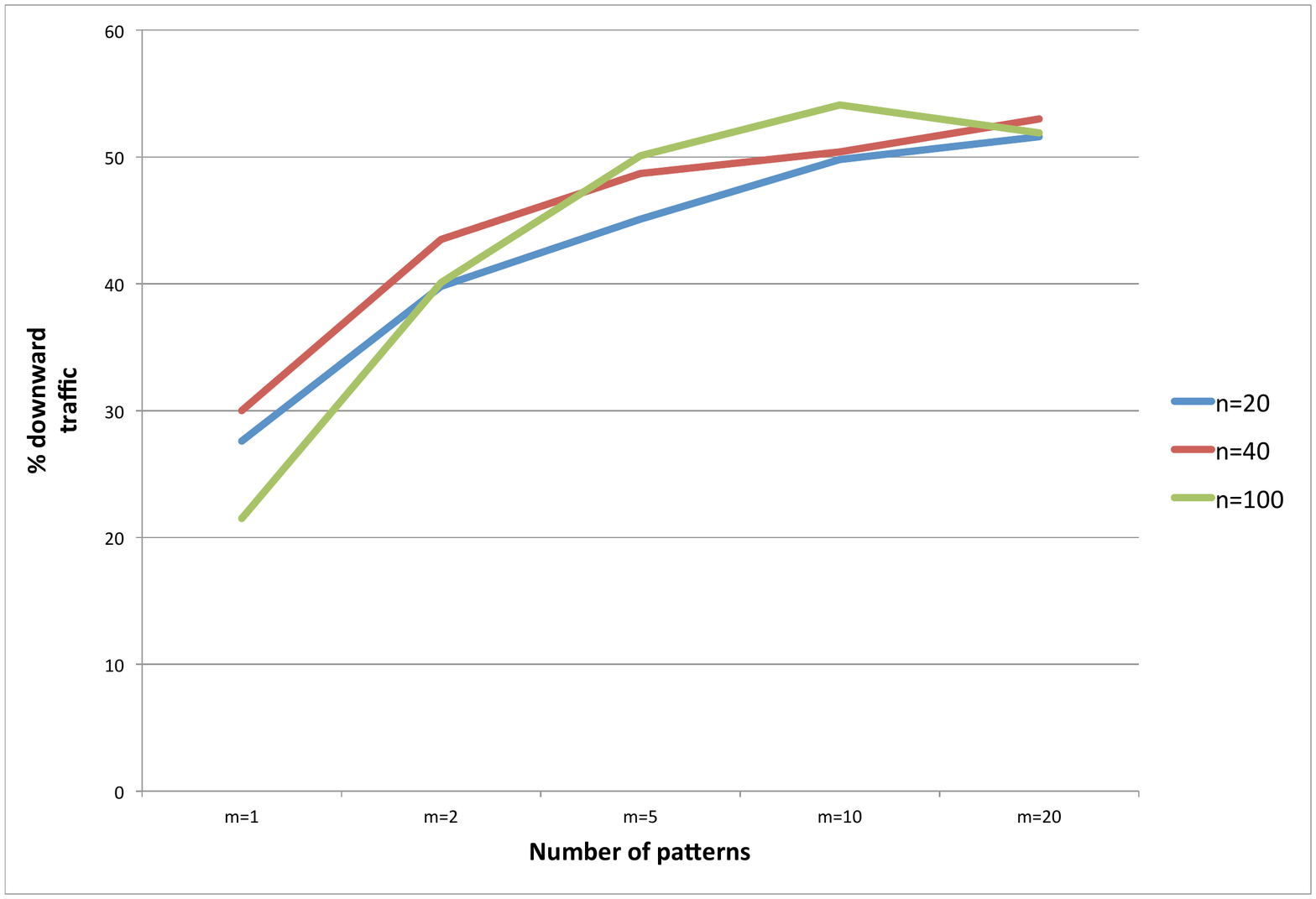}
\includegraphics[width=8.5cm]{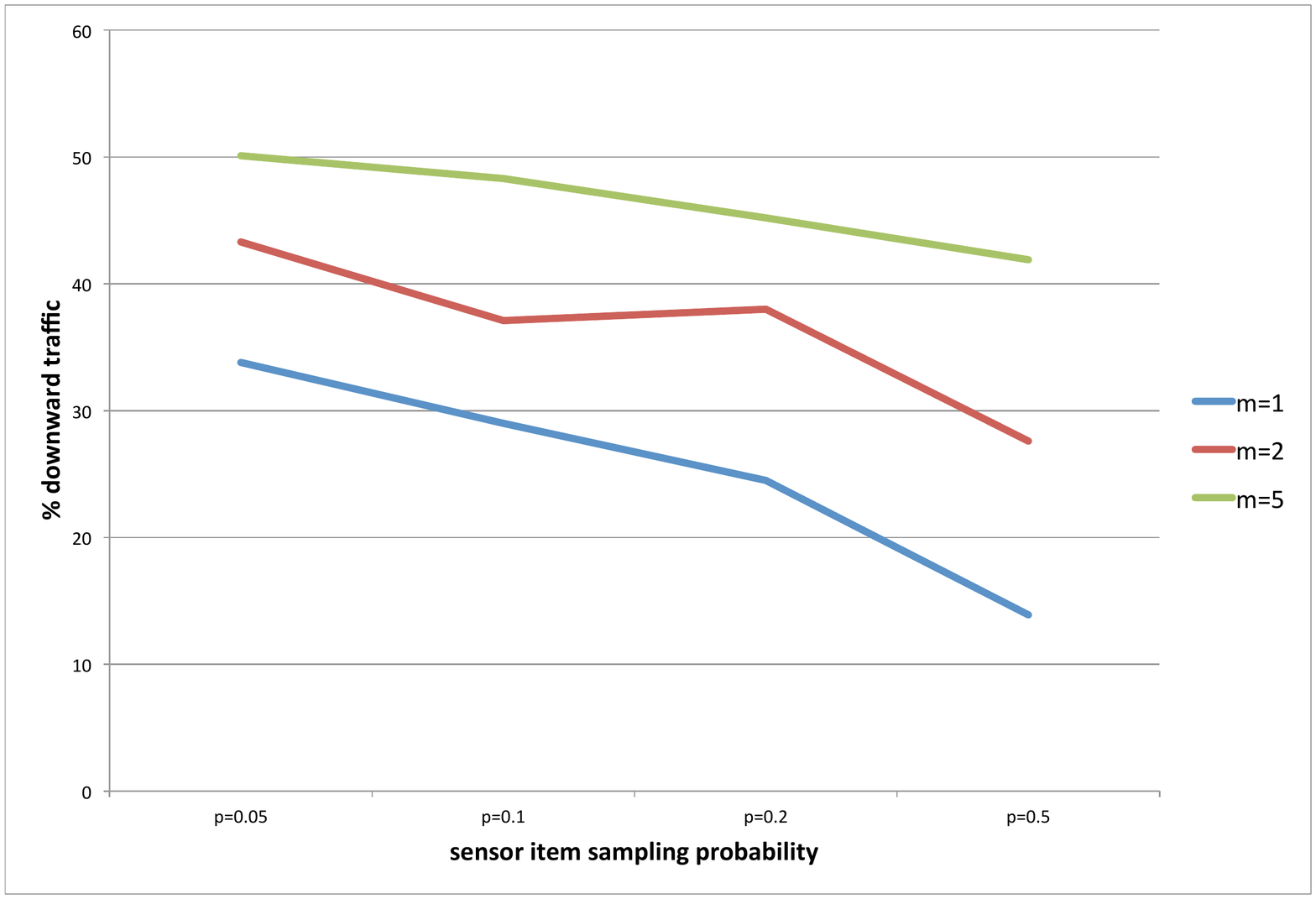}
\caption{\% of downward traffic with (a) more patterns (b) higher sensor sampling probability.}
\label{fig:downward}
\end{figure}

Next, items created in early presentations are relatively stable, with few PJOINS created on later presentations, as predicted by Theorem \ref{thm:learn}. 
Figure \ref{fig:newpjoins} shows the average number of new PJOIN's created in each presentation, as the number of presentations increases, and as the delay for PJOIN eligibility (for items that have fired) is increased. For these experiments we kept $n$ at 40 and 
$p=q=0.1$.

\begin{figure}
\includegraphics[width=8.5cm]{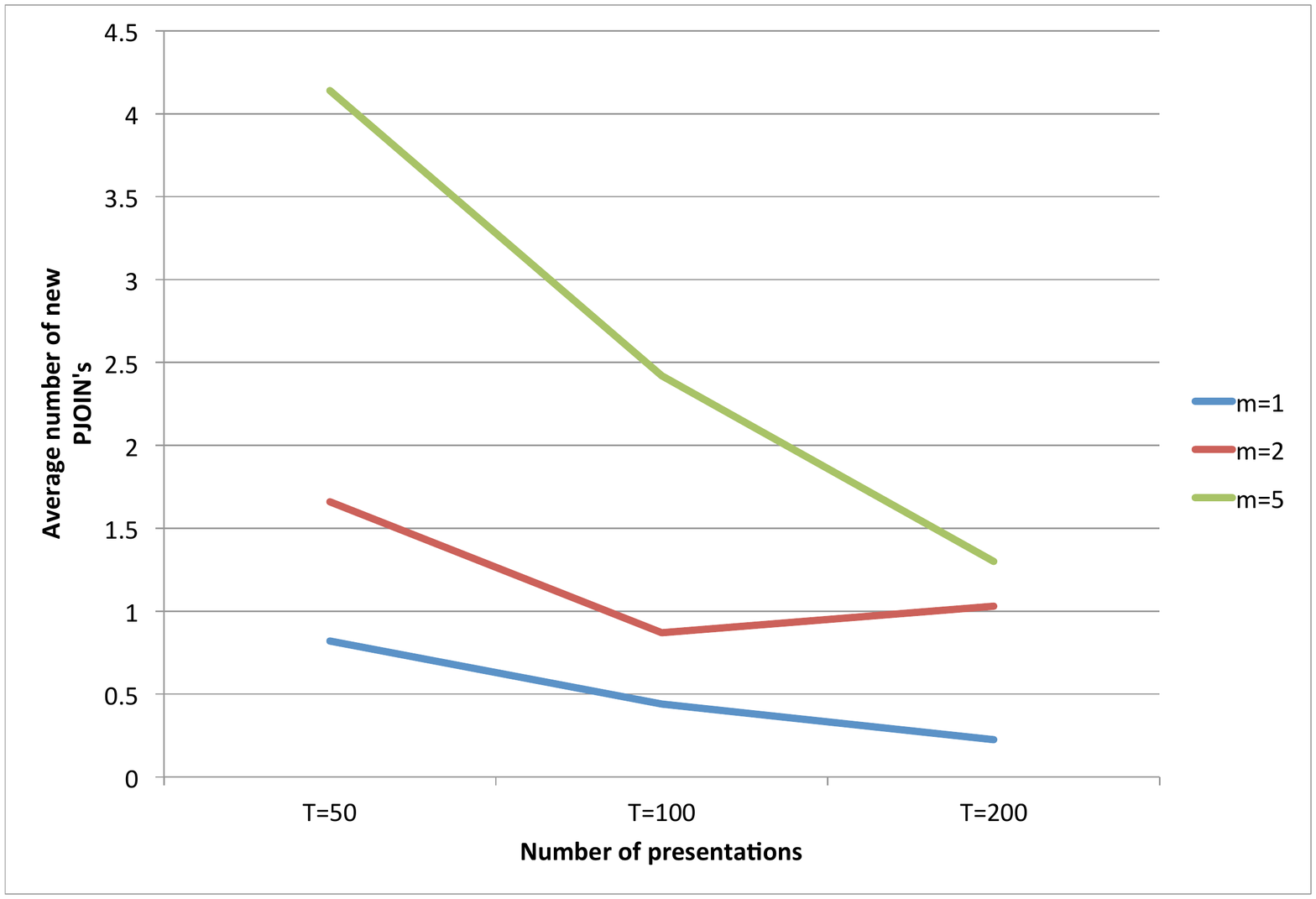}
\includegraphics[width=8.5cm]{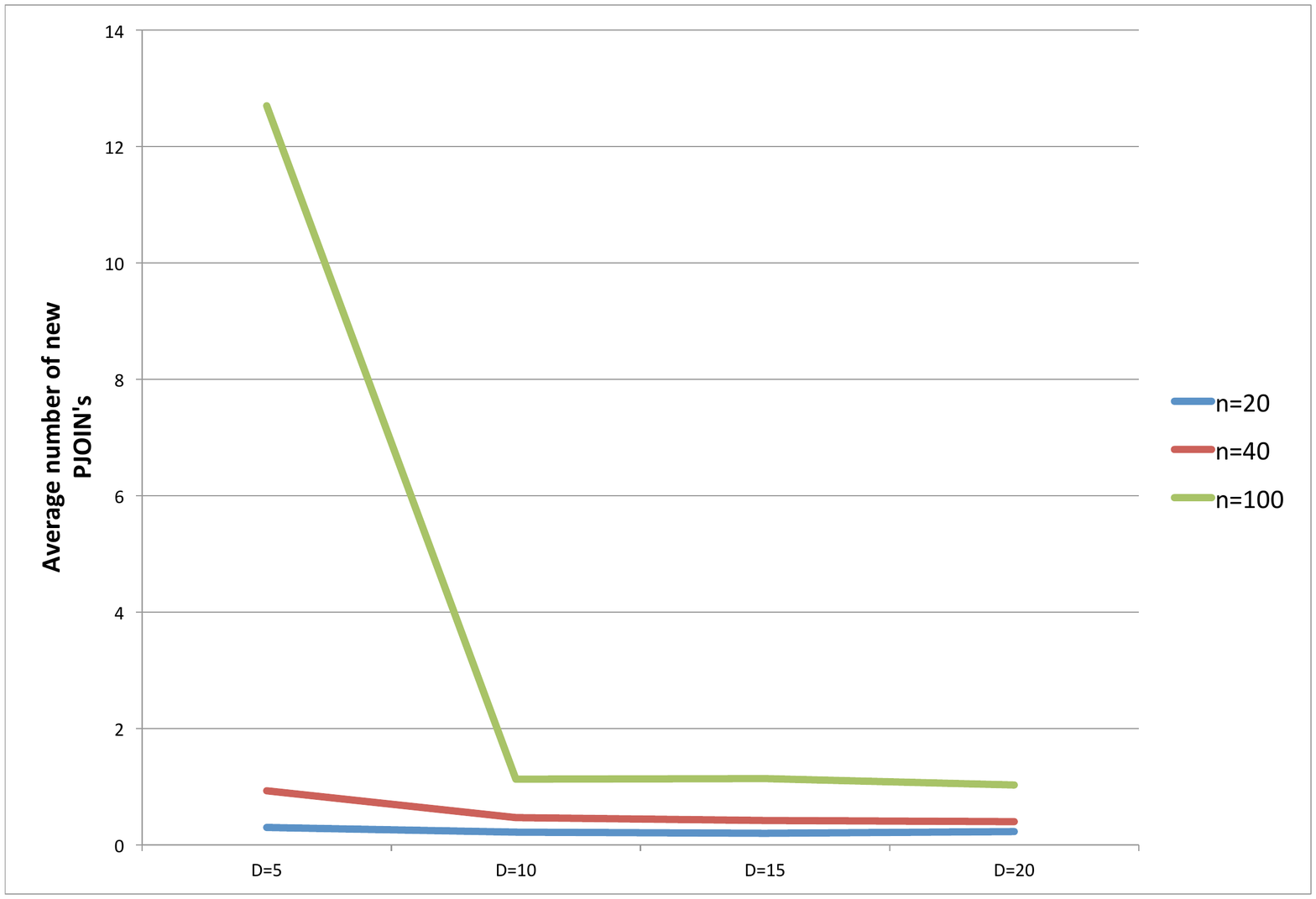}
\caption{Average number of new PJOIN's per presentation, with (a) increasing number of presentations (b) higher delay for PJOIN eligibility.}
\label{fig:newpjoins}
\end{figure}

Another experiment explores the sharing of common substructures by patterns.  We generated the patterns by starting with one random pattern with $n=100$, and obtain $9$ more patterns by perturbing each coordinate of it randomly with probability $p$. In other words, the Hamming distance between consecutive patterns was $p$ in expectation. 
We then used $10$ presentations drawn randomly from this set of patterns.
The perturbation probability $p$ varied from $0.05$ to $0.5$ (the upper bound produces completely random patterns). The results in Figure \ref{fig:perturb}  show that, as perhaps it could have been predicted, when the perturbation probability decreases (that is, when the patters are more similar), the total size of the structure created also decreases, as patterns share more substructures.

\begin{figure}[h]
\begin{center}
\includegraphics[width=9cm]{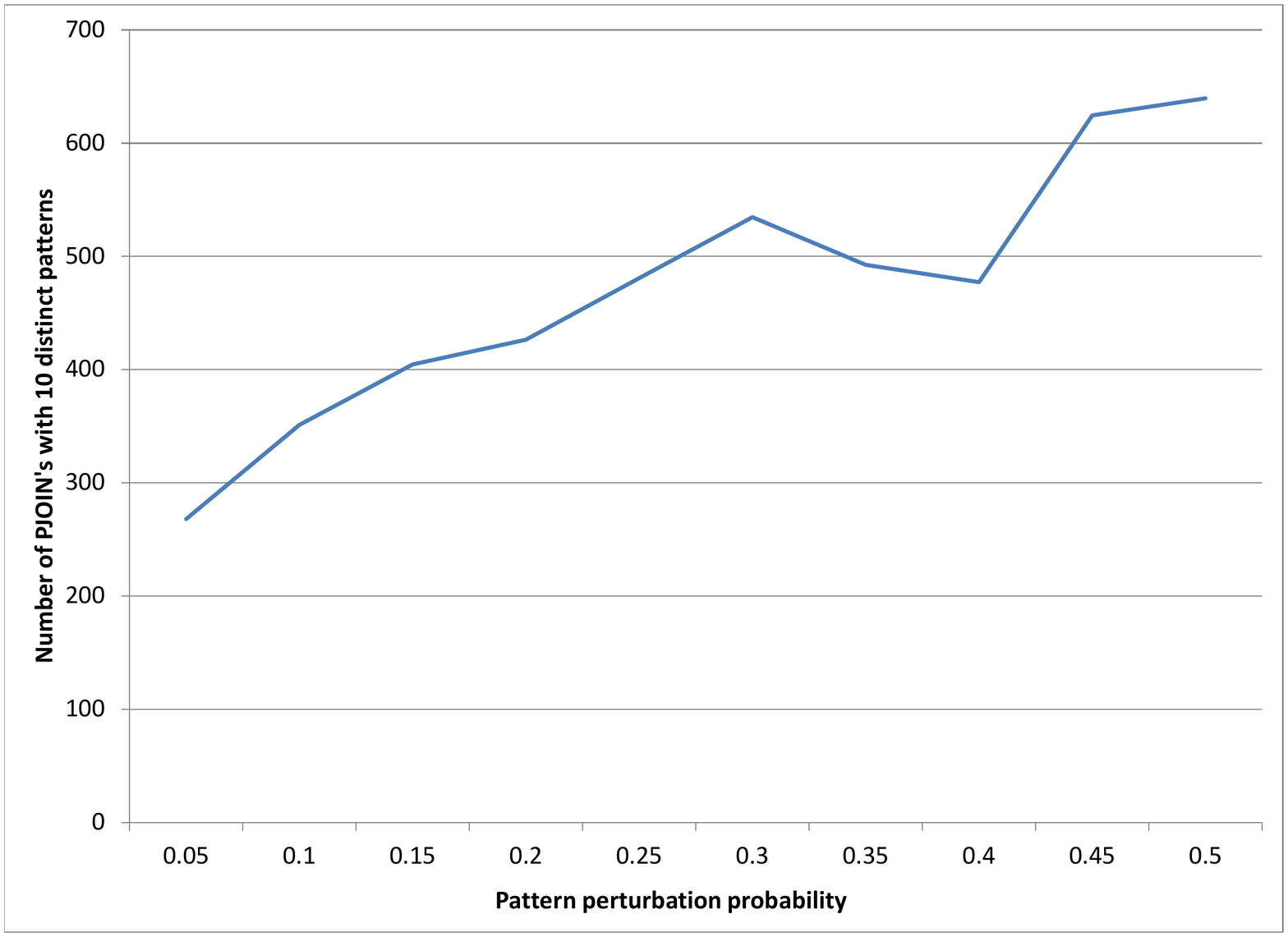}
\caption{Number of PJOINs created, with decreasing overlap between patterns}
\end{center}
\label{fig:perturb}
\end{figure}

Finally, one may wonder whether unsupervised learning could be done more effectively by making prediction {\em random}.  If a parent of an item $C=\PJ(A,B)$ fires downwards, we currently require $C_P$ to fire downwards to both $A$ and $B$.  One interesting variant would only fire downwards to {\em one} of $A$ and $B$, chosen at random.  This new version of $\PJ$ would generate considerably less downward traffic, and detect new patterns at a slower rate than the current one.

\section{Some Remarks on Control}\label{sec:control}
Vicinal algorithms are intended to be an austere, and therefore realistic, model of computation by the neurons in cortex.  And yet certain aspects of the way we are using vicinal algorithms in this work require some further discussion and justification.

Perhaps the most striking aspect of the use of vicinal algorithms on items, as pioneered by Valiant, is the assumption that the $r$ neuroids of an item can fire in synchrony, for example to initiate the creation of a JOIN or a PJOIN.  We refer to this extraneous ability as ``control.''  In the current section we briefly discuss how such control could be plausibly implemented within the vicinal model.  

\subsection*{Reciprocity and a modified random graph model}\label{sec:reciprocity}
Valiant's basic model assumes that neuroids are connected by a random directed graph of synapses in the $G_{np}$ model in which all possible edges have the same probability $p$ of being present.   Measurements of the cortex, however, suggest a more involved model. In particular, Song et a.l \cite{Song05} and subsequent work show that synaptic reciprocity (the chance that there is an edge from $i$ to $j$ {\em and  from $j$ to $i$\/}) is far greater than random.  This is established
for pairs chosen from quadruple whole-cell recordings, where the cells tend to be within close proximity of each other. 
The same is true of synaptic transitivity, i.e., the chance that two nodes connected by a path of length 2 are also connected directly.  

A simple extension of the classical $G_{n,p}$ random graph model can account reasonably well for these departures from randomness. We propose the following:  We start with $n$ vertices.  For every pair of vertices $i,j$ within a distance threshold 
(in the case of the data from \cite{Song05}, this applies to every pair), 
with probability $p$ only the $i \rightarrow j$ edge is in the graph; with probability $p$ only $j \rightarrow i$; and with probability $q$ {\em both} are in the graph; we assume that $2p+q<1$.  With the remaining $1-(2p+q)$ probability, no arc exists  between $i$ and $j$. 

This is already a useful model, incorporating the apparent reciprocity of cortical connections. A more elaborate {\em two-round model} also introduces transitivity.   The first round is as above.  In the second round, for each pair $i,j$ with no arc between $i$ and $j$, for every other proximal vertex $k$, if there is a path of length two between $i$ and $j$ from the first round, ignoring direction, then we add $i \rightarrow j$ with probability $r \cdot p/(2p+q)$, same for $j \rightarrow i$, and we add both arcs with probability $r \cdot q/(2p+q)$. 
This two-round model, with the specific values of  $p=0.05, q =0.047$ and $r=0.15$ gives a reasonable fit with the data in \cite{Song05} (see Figure \ref{fig:Gnpqr}).

\begin{figure}
\begin{center}
\includegraphics[width=10cm]{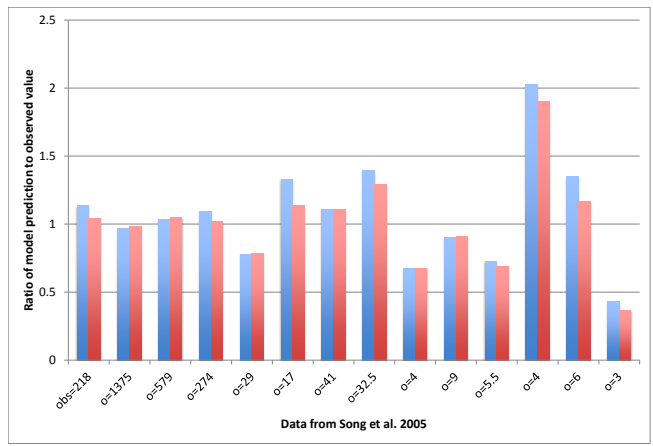}
\caption{Comparison of the random model with reciprocity data. The first bar in each pair is with $p=q=0.05$, the second with$p=0.05, q=0.047$.}
\end{center}
\label{fig:Gnpqr}
\end{figure}

\subsection*{An alternative implementation of $\PJ(A,B)$}
Once we assume that the neuroids have rich reciprocal connections between them, many possibilities arise.  For a first example, the creation of $\PJ(A,B)$ can be simplified a little.  After the neuroids of $C$ have been chosen, $C_P$ could consist of all neuroids in $C$ which happen to have reciprocal connections back to $A_P$ and $B_P$.  Now the $\L$ step is unnecessary, and the creation of a PJOIN can be carried out in three steps, as opposed to four.

\subsection*{Using reciprocity for the control of items}
Consider an item $A$.  Is there a neuroid which is connected to each of the $r$ neuroids  through a path of reciprocal connections of the same small length, call it $\ell$?  For reasonable values of $n$ (total number of available neuroids, say $10^9$), $r$ (number of neuroids in an item, say 50-100), and $q$ (probability that two neuroids are connected in both directions, say $10^{-5}$) it is not hard to see that such a neuroid is very likely to exist, with $\ell$ is bounded by 
\[
\frac{r-1}{r}\cdot\frac{\log n}{\log q n} \le 3.
\] 
Furthermore, we can suppose that such a neuroid $R_A$  (``the root of item $A$'') is discovered by a vicinal process soon after the establishment of item $A$.   One can then imagine that, once the neuroids of $A$ fire, their root can sense this, and make them fire synchronously in a future step, to create a $\PJ(A,B)$, as required by our algorithm.  The delay parameter $D$ of our memorization algorithm may now be seen as the required number of steps for this synchronization to be set up. 

\subsection*{Simultaneous PJOINs}
Another important step of our memorization algorithm, whose implementation in cortex may seem problematic, is the selection of items to be PJOINed.  How are they selected, and how are they matched in pairs?   And, furthermore, won't the {\sc Poised} neuroids (recall the creation of a JOIN described in Section 2) be ``confused'' by all other items firing at once?  Valiant's algorithm for creating JOINs assumes implicitly that nothing else is happening during these two steps; the simultaneous creation of several JOINs requires some thought and explanation.

Suppose that several PJOIN-eligible items fire, presumably in response to control signal from their respective roots, as hypothesized earlier.  It is reasonable to assume that these firings are not simultaneous, but occur at times differing by a fraction of the duration of a step.  One step after the first firing of an item $A$, and in response to that firing, several neuroids are {\sc Poised} to become part of a PJOIN of $A$ with another item.  If we represent the firing of item $A$ just by the symbol $A$, and the time of first availability of the {\sc Poised} items for it as $A'$, a sequence of events during our memorization algorithm can look like this:
$$\ldots A,\ B,\ C,\ A',\  D,\  B',\ C',\ E,\ F,\ D',\ G\   \ldots$$
where $A'$ happens one step after $A$ and similarly for $B'$ after $B$, etc.  Now this sequence of events will result in the following PJOINs being formed: $\PJ(A,D),\ \PJ(B,E),\ \PJ(C,E), \PJ(D,G)$.  Notice that $E$ formed two PJOINs, with both $B$ and $C$, and $F$ formed none, and this is of course fine.   This way, the spontaneous formation of PJOINs envisioned by our algorithm can happen without undesired ``interference'' between different PJOINing pairs, and with only minimal control.  

We believe that this alternative way to select items to be PJOINed in our learning algorithm (by choosing {\em random stars} joining existing items, instead of random edges)  enjoys the same favorable quantitative performance properties established by Theorem \ref{thm:learn}.


\section{Conclusion and Further Work}
We have introduced a new primitive, PJOIN, intended to capture the combining and predicting activity apparently taking place in cortex, and which can be implemented in Valiant's minimalistic model of vicinal algorithms. We showed that, by allowing items to spontaneously form PJOINs, starting from sensory inputs, complex patterns can be memorized and later recognized within very reasonable time bounds.  Much of the activity in this pattern recognition process consists of predicting unseen parts of the pattern, and is directed ``downwards'' in the hierarchy implied by PJOINs, in agreement with current understanding. 

This work touches on the world's most challenging scientific problem, so there is certainly no dearth of momentous questions lying ahead.  Here we only point to a few of them that we see as being close to the nature and spirit of this work.

\begin{itemize}
\item {\bf Invariance.}  We showed how PJOINs can be a part of the solution of a rather low-level problem in cognition, namely the memorization and recognition of patterns.  Naturally, there are much more challenging cognitive tasks one could consider, and we discuss here a next step that is both natural and ambitious.  One of the most impressive things our brain seems to accomplish is the creation of {\em invariants}, higher-level items capturing the fact that all the various views of the same object from different angles and in different contexts (but also, at an even higher level, all sounds associated with that object or person, or even all references to it in language, etc.), all refer to one and the same thing.  It would be remarkable if there is simple and plausible machinery, analogous to PJOIN, that can accomplish this seemingly miraculous clustering operation.  Incidentally, {\em language and grammar} seem to us to be important challenges that are related to the problem of invariance. 

\item {\bf A PJOIN machine?}  This work, besides introducing a useful cortical primitive, can be also seen as a particular stance on cortical computation:  Can it be that the amazing accomplishments of the brain can be traced to some very simple and algorithmically unsophisticated primitives, which however take place at a huge scale?  

\item {\bf Modeling the environment.}  If indeed cortical computation relies not on sophisticated algorithms but on crude  primitives like PJOIN, then the reasons for its success must be sought elsewhere, and most probably in two things:  First, the {\em environment} within which the mammalian cortex has accomplished so much. And secondly, the rich {\em interfaces} with this environment, through sensors and actuators and lower-level neural circuits, resulting, among other things, in complex feedback loops, and eventually modifications of the environment.  What seems to be needed here is a new theory of probabilistic models capable of characterizing the kinds of ``organic'' environments appearing around us, which can result --- through interactions with simple machinery, and presumably through evolution --- to an ascending spiral of complexity in cognitive behavior.
\end{itemize}

\noindent
{\bf Acknowledgements.} We are grateful to Tatiana Emmanouil and Sebastian Seung for useful references, and to Vitaly Feldman and Les Valiant for helpful feedback on an earlier version of this paper.

\bibliographystyle{plain}
\bibliography{neuro}

\end{document}